\documentclass[11pt]{article}

\usepackage{enumerate}
\usepackage{amssymb}
\usepackage{amsmath}
\usepackage{amsthm}
\usepackage{tikz}
\usetikzlibrary{automata}
\usepackage{tikz-cd}
\usepackage[colorlinks=true,linkcolor=blue,citecolor=blue]{hyperref}

\usepackage{nicematrix}

\usepackage{circledsteps}
\pgfkeys{/csteps/inner xsep=1.5pt}
\pgfkeys{/csteps/inner ysep=1.5pt}

\numberwithin{equation}{section}

\makeatletter
\def\th@plainsl{\slshape}
\makeatother

\theoremstyle{plainsl}
  \newtheorem{THM}{Theorem}[section]
  \newtheorem{LEM}[THM]{Lemma}
  \newtheorem{PROP}[THM]{Proposition}

\theoremstyle{definition}
  \newtheorem{DEF}[THM]{Definition}
  \newtheorem{EX}[THM]{Example}

  \newtheorem{ASSUMPTION}{Assumption}
  \newtheorem*{FACT}{Fact}

\renewcommand{\le}{\leqslant}
\renewcommand{\ge}{\geqslant}

\renewcommand{\phi}{\varphi}
\renewcommand{\epsilon}{\varepsilon}

\newcommand{\CC}{\mathbf{C}}

\newcommand{\DD}{\mathbf{D}}

\newcommand{\NN}{\mathbb{N}}

\newcommand{\RR}{\mathbb{R}}

\newcommand{\Boxed}[1]{\mbox{$#1$}}
\newcommand{\id}{\mathrm{id}}

\newcommand{\ID}{\mathrm{ID}}
\newcommand{\Ob}{\mathrm{Ob}}

\newcommand{\Mat}{\mathrm{Mat}}

\newcommand{\calO}{\mathcal{O}}

\newcommand{\calU}{\mathcal{U}}

\newcommand{\Set}{\mathbf{Set}}
\newcommand{\Vect}{\mathbf{Vect}}
\newcommand{\FdnVect}{\mathbf{FdnVect}}
\newcommand{\Top}{\mathbf{Top}}

\title{Coalgebras for categorical deep learning:\\Representability and universal approximation}
\author{Dragan Ma\v sulovi\'c\\
        University of Novi Sad, Faculty of Sciences\\
        Department of Mathematics and Informatics\\
        Trg Dositeja Obradovi\'ca 3, 21000 Novi Sad, Serbia\\
        email: dragan.masulovic@dmi.uns.ac.rs
}
\date{\today}

\begin{document}
\maketitle

\begin{abstract}
  Categorical deep learning (CDL) has recently emerged as a framework that leverages category 
  theory to unify diverse neural architectures. While geometric deep learning (GDL) is grounded
  in the specific context of invariants of group actions, CDL aims to provide domain-independent
  abstractions for reasoning about models and their properties. In this paper, we contribute to
  this program by developing a \emph{coalgebraic} foundation for equivariant representation
  in deep learning, as classical notions of group actions and equivariant maps are naturally
  generalized by the coalgebraic formalism. Our first main result demonstrates that,
  given an embedding of data sets formalized as a functor from $\Set$ to $\Vect$,
  and given a notion of invariant behavior on data sets modeled by an endofunctor on $\Set$,
  there is a corresponding endofunctor on $\Vect$ that is compatible with the embedding in the sense that
  this lifted functor recovers the analogous notion of invariant behavior on the embedded data.
  Building on this foundation, we then establish a universal approximation theorem for equivariant
  maps in this generalized setting. We show that continuous equivariant functions can be approximated
  within our coalgebraic framework for a broad class of symmetries. This work thus provides a
  categorical bridge between the abstract specification of invariant behavior and its concrete realization in neural architectures.

  \medskip
  
  \noindent
  \textbf{Key Words and Phrases:} categorical deep learning, coalgebraic modeling, equivariant representation, universal approximation
\end{abstract}

\section{Introduction}

Categorical deep learning (CDL) has recently emerged as a higher-level framework grounded in category theory.
CDL seeks to unify various approaches to deep learning by emphasizing compositionality and universal constructions as
core design principles for neural architectures \cite{CDL-position-paper} (see also~\cite{CDL-website} for extensive bibliography).
Unlike geometric deep learning (GDL), which is tightly bound to specific geometric formalisms
(in the sense of Felix Klein's Erlangen Program \cite{GDL-main,GDL-Protobook,GDL-website}),
CDL aims to provide domain-independent abstractions
for reasoning about models, learning dynamics, and structural invariants.
This categorical perspective has the potential to systematically organize existing architectures,
illuminate connections between them, and guide the design of new models with provable properties
(see for example~\cite{Cat-GBL-2022}).
In this sense, CDL represents a step toward a universal foundation for deep learning,
complementing and extending the insights gained from GDL.

While existing categorical approaches to deep learning have productively formalized the fundamental mechanisms of learning
such as backpropagation and gradient-based optimization \cite{backprop-as-ftr,gavran-2019},
in this paper we use the machinery of category theory to understand equivariant representation
of data at a very abstract level, and to show a corresponding universal approximation theorem (UAT) for equivariant maps.
Although our main motivation comes from the \emph{algebraic} theory of architectures introduced in \cite{CDL-position-paper}, 
we propose a model for categorical deep learning where equivariance is generalized using \emph{coalgebras}.

Coalgebraic modeling has emerged within theoretical computer science as a foundational framework for modeling dynamic and
state-based systems~\cite{rutten-2000}. While algebras are concerned with the composition of elements,
coalgebras capture the decomposition or observation of system behavior over time,
making them particularly suited for representing entities with evolving internal states.
This paradigm provides a uniform mathematical language for a wide array of computational phenomena,
including deterministic and non-deterministic automata, labeled transition systems, infinite data structures,
and probabilistic models. The key insight is that a broad class of systems can be formalized as coalgebras
for a suitable endofunctor, which specifies the system's transition type or observable behavior.
This categorical abstraction facilitates the development of generic theories and tools for key system properties,
including behavioral equivalence (often characterized by bisimulation), semantic logics, and minimization algorithms
that operate uniformly across different system types.

In Section~\ref{euat.sec.coalg} we review the basic notions of the theory of coalgebras and illustrate, in a series of examples,
how the coalgebraic language subsumes and generalizes the classical concepts of group actions and equivariant maps.

Our first general result, presented in Section~\ref{euat.sec.representability}, addresses the following situation.
Suppose we have an embedding of data sets into vector spaces, given by a functor $\Set \to \Vect$.
Suppose, further, that a particular type of invariant behavior on data sets can be modeled coalgebraically, that is,
as coalgebras for some endofunctor on $\Set$. We then show that there exists a ``compatible'' endofunctor on $\Vect$
that lifts this structure, allowing the same invariant behavior to be recovered on the embedded data.
Crucially, the construction follows from standard categorical techniques, requiring no additional ad hoc machinery.

Finally, in Section~\ref{euat.sec.UAT} we present a symmetrization-based approach to constructing equivariant approximations
building on the Universal Approximation Theorem for shallow neural networks.
Our general Universal Approximation Theorem (UAT) shows that general equivariant continuous functions can be approximated by
equivariant maps computable by feed-forward \emph{vector} neural networks with a single hidden layer.
Prior work (see, e.g.~\cite{DengEtAl-VectorNeurons2021,KatzirEtAl-VectorNeurons-2022})
has demonstrated that vector neural networks are well suited for capturing equivariance with
respect to specific symmetry groups such as $\mathit{SO}(3)$, thereby enabling models to respect the geometric structure of three-dimensional space.
While the symmetries arising in our coalgebraic framework are considerably more general, the core principle remains the same:
by leveraging the same class of architectures, we aim to extend these benefits to the class of equivariant
contexts modeled by coalgebras.

\section{Coalgebras}
\label{euat.sec.coalg}

We assume the familiarity with the basic notions of category theory. For technicalities we refer the reader to~\cite{AHS}
and~\cite{Riehl}. In particular, $\Set$ is the category of sets and functions between them, $\Vect$ is the category
of vector spaces and linear maps between them, and $\Top$ is the category of topological spaces and continuous maps
between them. We shall also make use of the category $\FdnVect$ of finitely-dimensional normed vector spaces and
\emph{continuous maps} between them. So, for each pair $V, W \in \Ob(\FdnVect)$ we have the set
$C(V, W) = \hom_{\FdnVect}(V, W)$ of continuous maps from $V$ to $W$, as well as the set $L(V, W) \subseteq C(V, W)$
of linear maps from $V$ to $W$.

A coalgebra is a structure that describes how a system evolves over time.
While algebras are modeled by morphisms $F(A) \to A$, coalgebras are modeled by morphisms of the form $A \to F(A)$.

Let $\CC$ be a category and $F : \CC \to \CC$ an endofunctor.
An \emph{$F$-coalgebra (in $\CC$)} is a pair $(A, \alpha)$ where $A \in \Ob(\CC)$ and $\alpha : A \to F(A)$ is a morphism in $\CC$.
The morphism $\alpha$ is the \emph{structure morphism} and the object $A$ is the \emph{carrier} of the coalgebra $(A, \alpha)$,
or the \emph{universe of $(A, \alpha)$} in case $\CC = \Set$.
The functor $F$ is called the \emph{type functor} of the coalgebra $(A, \alpha)$.

A morphism $f \in \hom_\CC(A, B)$ is a \emph{coalgebra homomorphism} from an $F$-coalgebra $(A, \alpha)$ to an $F$-coalgebra $(B, \beta)$ if:
\begin{center}
  \begin{tikzcd}
    A \arrow[r, "f"] \arrow[d, "\alpha"'] & B \arrow[d, "\beta"] \\
    F(A) \arrow[r, "F(f)"'] & F(B)
  \end{tikzcd}
\end{center}
The category whose objects are $F$-coalgebras in $\CC$ and morphisms are coalgebra homomorphisms is denoted by $\CC_F$.
In particular, for $F : \Set \to \Set$, by $\Set_F$ we denote the category of all $F$-coalgebras in $\Set$.

\begin{EX}
  Let $G$ be a group acting on a set $A$. That means that we are given a mapping $\xi : G \times A \to A$
  such that $\xi(1, a) = a$ and $\xi(g_1, \xi(g_2, a)) = \xi(g_1 g_2, a)$.
  Using the well-known adjunction $\hom(G \times A, A) \cong \hom(A, A^G)$ known as \emph{currying} in the
  functional programming community, instead of $\xi : G \times A \to A$ we can consider
  $\alpha : A \to A^G$ where $\alpha(a)$ is the \emph{input handler} $\alpha(a) = h_a \in A^G$
  defined by $h_a(g) = \xi(g, a)$. Therefore, the group action can be modeled by the \emph{algebra}
  $(A, \xi)$, but also by the $F$-coalgebra $(A, \alpha)$ for the $\Set$-endofunctor $F(X) = X^G$.

  Assume, now, that $G$ acts on sets $A$ and $B$ and let $\xi : G \times A \to A$ and
  $\theta : G \times B \to B$ be the group actions. Recall that a map $f : A \to B$ is an equivariant map
  if $\theta(g, f(a)) = f(\xi(g, a))$.

  Let $(A, \alpha)$ and $(B, \beta)$ be $F$-coalgebras that model the group actions $\xi$ and $\theta$, respectively.
  The main insight now (see also~\cite{CDL-position-paper}) is the following:
  \begin{quote}
    \it $f$ is an equivariant map between the two group actions if and only if $f$ is a coalgebra homomorphism
    from $(A, \alpha)$ to $(B, \beta)$.
  \end{quote}
  Namely, let $\alpha(a) = h_a$ be the input handler $h_a(g) = \xi(g, a)$
  and let $\beta(b) = k_b$ be the input handler $k_b(g) = \theta(g, b)$.
  Since $f$ is a coalgebra homomorphism, for a particular $a \in A$ we now have:
  \begin{center}
    \begin{tikzcd}
      a \arrow[r, mapsto, "f"] \arrow[d, mapsto, "\alpha"'] & f(a) \arrow[d, mapsto, "\beta"] \\
      h_a \arrow[r, mapsto, "{f \circ -}"'] & k_{f(a)}
    \end{tikzcd}
  \end{center}
  In other words, $f \circ h_a = k_{f(a)}$. If we instantiate this for an arbitrary $g \in G$ we get:
  $
    f(h_a(g)) = k_{f(a)}(g) \Rightarrow f(\xi(g, a)) = \theta(g, f(a))
  $.
\end{EX}

For this reason, we use the notions ``coalgebra homomorphism $f : (A, \alpha) \to (B, \beta)$'' and
``equivariant map $f : (A, \alpha) \to (B, \beta)$'' synonymously.

Let $F : \Set \to \Set$ be a $\Set$ endofunctor.
An $F$-coalgebra $(B, \beta)$ is a \emph{subcoalgebra} of an $F$-coalgebra $(A, \alpha)$
if $B \subseteq A$ and the inclusion map $\iota : B \to A : x \mapsto x$ is a coalgebra homomorphism:
\begin{center}
  \begin{tikzcd}
    B \arrow[r, "\iota"] \arrow[d, "\beta"'] & A \arrow[d, "\alpha"] \\
    F(B) \arrow[r, "F(\iota)"'] & F(A)
  \end{tikzcd}
\end{center}
If $(B, \beta)$ is a subcoalgebra of $(A, \alpha)$ we also say that the set $B$ is a \emph{subuniverse} of $(A, \alpha)$.
Subuniverses of coalgebras correspond to invariant subsets, as the following example suggests.

\begin{EX}
  Let $G$ be a group and $\xi : G \times A \to A$ an action of $G$ on a set $A$.
  Let $(A, \alpha)$ be the $F$-coalgebra that captures the action $\xi$, where
  $F : \Set \to \Set$ is the functor $F(A) = A^G$.
  Let $(S, \sigma)$ be a subcoalgebra of $(A, \alpha)$ and let us show that $S$ is an invariant subset of $A$
  in the sense that $\xi(g, x) \in S$ for all $g \in G$ and $x \in S$.
  Take any $x \in S$. Then $\sigma(x) \in F(S) = S^G$. In other words,
  the input handler $\sigma(x) : G \to S$ is a well-defined function,
  so for every $g \in G$ we have that $\sigma(x)(g) \in S$. If we translate this back to the language of group actions, we get
  $\xi(g, x) \in S$.
\end{EX}

It is well-known that for any category $\CC$ and any endofunctor $F : \CC \to \CC$, the forgetful functor
$U_F : \CC_F \to \CC : (A, \alpha) \mapsto A$ creates colimits (for a detailed exposition see~\cite{Bar93}).
In particular, $\Set_F$ and $\Vect_E$ are cocomplete for every $F : \Set \to \Set$ and $E : \Vect \to \Vect$.

A \emph{comonad} on a category $\CC$ is a triple $(E, \delta, \epsilon)$ where $E : \CC \to \CC$ is an endofunctor,
$\delta : E \Rightarrow EE$ is a natural transformation called the \emph{comultiplication} for $E$,
and $\epsilon : E \Rightarrow \ID$ is a natural transformation called the \emph{counit} for $\delta$.
The comultiplication $\delta$ is required to be \emph{coassociative}
in the sense that for every $A \in \Ob(\CC)$ the following diagram commutes:
\begin{center}
  \begin{tikzcd}
    E(A) \arrow[rr, "\delta_A"] \arrow[d, "\delta_A"'] & & EE(A) \arrow[d, "\delta_{E(A)}"] \\
    EE(A) \arrow[rr, "E(\delta_A)"'] & & EEE(A)
  \end{tikzcd}
\end{center}
The counit $\epsilon$ is required to be \emph{counital} in the sense that for every $A \in \Ob(\CC)$ the following diagram commutes:
\begin{center}
  \begin{tikzcd}
    E(A) \arrow[rr, "\delta_{A}"] \arrow[d, "\delta_A"'] \arrow[drr, "\id_{E(A)}" description] & & EE(A) \arrow[d, "\epsilon_{E(A)}"] \\
    EE(A) \arrow[rr, "E(\epsilon_A)"'] & & E(A)
  \end{tikzcd}
\end{center}

To each comonad $(E, \delta, \epsilon)$ we can straightforwardly assign the \emph{Eilenberg-Moore category}
whose objects are $E$-comodules (special $E$-coalgebras to be defined immediately),
morphisms are coalgebraic homomorphisms and the composition is as in $\CC$.
An \emph{$E$-comodule} is an $E$-coalgebra $\alpha : A \to E(A)$ for which the following two diagrams commute:
\begin{center}
  \begin{tabular}{c@{\qquad}c}
    \begin{tikzcd}
      A \arrow[r, "\alpha"] \arrow[d, "\alpha"']  & E(A) \arrow [d, "\delta_A"]\\
      E(A) \arrow[r, "E(\alpha)"'] & EE(A)
    \end{tikzcd}
    &
    \begin{tikzcd}
      A \arrow[d, "\alpha"'] \arrow[dr, "\id_A"] & \\
      E(A) \arrow[r, "\epsilon_A"'] & A
    \end{tikzcd}
  \end{tabular}
\end{center}

\begin{EX}
  Recall that for a vector space $V$ and a set $X$ by $V^X$ we denote the vector space of all
  functions $X \to V$ where the vector space structure is introduced
  coordinatewise:
  \begin{itemize}
    \item $(a \cdot \phi)(x) = a \cdot \phi(x)$ for all $x \in X$, and
    \item $(\phi + \psi)(x) = \phi(x) + \psi(x)$ for all $x \in X$,
  \end{itemize}
  where $\phi, \psi \in V^X$ and $a \in \RR$.

  Let $(G, \Boxed\cdot, 1)$ be a group. 
  Then the \emph{group action comonad (on vector spaces)} is the comonad $(E, \delta, \epsilon)$ defined as follows.
  Define $E : \Vect \to \Vect$ by $E(V) = V^G$ on objects, while for a mapping $f : V \to W$
  we let $E(f) : V^G \to W^G : h \mapsto f \circ h$.
  Next, define the comultiplication $\delta_V : E(V) \to EE(V)$ by
  $$
    \delta_V(\phi)(g)(h) = \phi(h \cdot g) \text{ for all }g, h \in G,
  $$
  and let the counit $\epsilon_V : E(V) \to V$ be given by
  $$
    \epsilon_V(\phi) = \phi(1).
  $$
  It is now a matter of straightforward calculation to verify that $(E, \delta, \epsilon)$ is
  a comonad on $\Vect$.
  
  Let us show that comodules for this comonad capture $G$-actions by linear maps.
  Let $\alpha : V \to V^G$ be an $E$-comodule. Define $\xi : G \times V \to V$ by
  $$
    \xi(g, v) = \alpha(v)(g)
  $$
  and let us show that this is a $G$-action, that is,
  $\xi(g, \xi(h, v)) = \xi(gh, v)$ and and $\xi(1, v) = v$ for all $g, h \in G$ and $v \in V$.
  The requirement
  \begin{center}
    \begin{tikzcd}
      V \arrow[d, "\alpha"'] \arrow[dr, "\id_V"] & \\
      E(V) \arrow[r, "\epsilon_V"'] & V
    \end{tikzcd}
  \end{center}
  tells us that $\epsilon_V \circ \alpha = \id_V$ so this applied to an arbitrary but fixed $v \in V$ gives: $\epsilon_V(\alpha(v)) = v$.
  By definition of $\epsilon$ this becomes $\alpha(v)(1) = v$, whence
  $$
    \xi(1, v) = v.
  $$
  The other requirement that $\alpha$ satisfies:
  \begin{center}
    \begin{tikzcd}
      V \arrow[r, "\alpha"] \arrow[d, "\alpha"']  & E(V) \arrow [d, "\delta_V"]\\
      E(V) \arrow[r, "E(\alpha)"'] & EE(V)
    \end{tikzcd}
  \end{center}
  means that $\delta_V \circ \alpha = E(\alpha) \circ \alpha$. Applying this to $v \in V$ gives:
  $$
    \delta_V (\alpha(v)) = E(\alpha)(\alpha(v)) \in EE(V) = (V^G)^G.
  $$
  Therefore, both $\delta_V (\alpha(v))$ and $E(\alpha)(\alpha(v))$ are functions $G \to(G \to V)$,
  so we have to verify that:
  \begin{equation}\label{coalg-cdl.eq.gacomonad-1}
    \delta_V (\alpha(v))(g)(h) = E(\alpha)(\alpha(v))(g)(h)
  \end{equation}
  for all $g, h \in G$. 
  The left-hand side evaluates straightforwardly by definition of $\delta$ and $\xi$:
  \begin{equation}\label{coalg-cdl.eq.gacomonad-2}
    \delta_V (\alpha(v))(g)(h) = \alpha(v)(hg) = \xi(hg, v).
  \end{equation}
  On the other hand, $\alpha(v)$ is a function $G \to V$, so, for the sake of simplicity,
  let $G = \{1, g_1, \ldots, g_n\}$ and
  $$
    \alpha(v) = \left(
    \begin{matrix}
      1 & g_1 & \ldots & g_n \\
      v & w_1 & \ldots & w_n
    \end{matrix}
  \right)
  $$
  for some $w_1, \ldots, w_n \in V$. (Note that we have already shown that $\alpha(v)(1) = v$.)
  Now, $E(\alpha)$ acts by unpacking the structure, applying $\alpha$ to the elements of $V$ and then
  repacking everything back. So,
  $$
    E(\alpha)(\alpha(v)) = \left(
      \begin{matrix}
        1 & g_1 & \ldots & g_n \\
        \alpha(v) & \alpha(w_1) & \ldots & \alpha(w_n)
      \end{matrix}
    \right)
  $$
  In particular, $E(\alpha)(\alpha(v))(g_1) = \alpha(w_1)$. But $w_1 = \alpha(v)(g_1)$, whence
  $E(\alpha)(\alpha(v))(g_1) = \alpha(\alpha(v)(g_1))$. This is, of course, true for every other element of $G$
  so the above description of $E(\alpha)(\alpha(v))$ can be written as follows:
  $$
    E(\alpha)(\alpha(v))(g) = \alpha(\alpha(v)(g)) \text{ for every } g \in G.
  $$
  Therefore, for every $g, h \in G$ we have:
  \begin{equation}\label{coalg-cdl.eq.gacomonad-3}
    E(\alpha)(\alpha(v))(g)(h)
    = \alpha(\alpha(v)(g))(h)
    = \alpha(\xi(g, v))(h)
    = \xi(h, \xi(g, v)).
  \end{equation}
  Putting \eqref{coalg-cdl.eq.gacomonad-1}, \eqref{coalg-cdl.eq.gacomonad-2} and \eqref{coalg-cdl.eq.gacomonad-3} together
  finally gives
  $$
    \xi(hg, v) = \xi(h, \xi(g, v)).
  $$
  So, $\xi$ is indeed a $G$-action.
\end{EX}

\section{Representability}
\label{euat.sec.representability}

Let $S$ be a set of samples and let $V(S)$ be a vector space which we think of as a feature space appropriate for $S$.
An \emph{embedding} is then a mapping $e : S \to V(S)$ that maps each sample to a point in the feature space.
For example, if $S$ is a set of images, then the embedding $e$ can be thought of as a \emph{feature extraction}
function that extracts features from the samples.

Assume now that $S$ is endowed with some structure, say, a group action $\xi : G \times S \to S$.
Then $S$ becomes a \emph{sample space} and the embedding $e$ is then a mapping from the sample space to the feature space.
The main idea behind this paper is that a sample space is modelled by a $\Set_F$-coalgebra
$$
  \alpha : S \to F(S)
$$
for an endofunctor $F : \Set \to \Set$. If we wish the embedding $e$ to be equivariant, the feature space should also be endowed with some structure
and then $e$ is expected to respect this structure.
However, we cannot expect the two structures to be modeled by the same functor $F$
since the feature space is a vector space and lives in $\Vect$ while the sample space is a set and lives in $\Set$.
So, one should be able to find an endofunctor $G : \Vect \to \Vect$ such that the corresponding feature space is modelled by a $G$-coalgebra
$$
  \overline\alpha : V(S) \to G(V(S))
$$
for some vector space $V(S)$ and a linear map $\overline \alpha$.
To implement equivariance we shall then have to ``repack'' the structure of the sample space to match the structure of the feature space.

Let us briefly recall the idea behind group representations in a setting that is equivalent to, yet slightly different from the standard one.
Let
$$
  \Mat(\RR) = \bigcup_{n \ge 1} \RR^{n \times n}
$$
denote the set of all real square matrices of all finite sizes (note that the matrix multiplication is
a partial operation on $\Mat(\RR)$). Then a linear representation of a group $G$ is any homomorphism
$$
  h : G \to \Mat(\RR).
$$
The representation is trivial if $h$ maps the entire group $G$ to a single matrix
which is then necessarily the identity matrix.
The requirement that $h$ be a homomorphism ensures that all the matrices $h(g)$, $g \in G$, have the same
size since
$$
  h(1 \cdot g) = h(1) \cdot h(g).
$$
This implicitly ensures that for every $g \in G$ the product $h(1) \cdot h(g)$ is defined,
whence $h(g)$ and $h(1)$ must be of the same size, for all $g \in G$.

All this can be succinctly described using the language of categories.
Let $[G]$ denote the category with a single object $*$ and with $\hom_{[G]}(*, *) = G$.
Then a linear representation of a group $G$ is a functor
$$
  H : [G] \to \Vect.
$$
The representation is trivial if $H$ is a constant functor.
This is our principal motivation behind the following notion:

\begin{DEF}
  Let $\CC$ be a category. A \emph{linear representation of $\CC$} is a functor $V : \CC \to \Vect$.
\end{DEF}

\begin{DEF}
  Let $F : \CC \to \CC$ be a functor. An \emph{equivariant representation of $\CC_F$} is a functor $V^* : \CC_F \to \Vect_E$
  for some functor $E : \Vect \to \Vect$. The representation is \emph{trivial} if $V^*$ is a constant functor; otherwise
  it is \emph{nontrivial}. The category $\CC_F$ \emph{has a (nontrivial) equivariant representation} if there exist a
  functor $E : \Vect \to \Vect$, and a (non-constant) functor $V^* : \CC_F \to \Vect_E$.
\end{DEF}

\begin{LEM}\label{euat.lem.lifting}
  Let $F : \CC \to \CC$ and $E : \DD \to \DD$ be endofunctors, and $V : \CC \to \DD$ a functor.
  If there exists a natural transformation $\lambda : VF \Rightarrow EV$ then $V$ can be lifted to
  a functor $V^* : \CC_F \to \DD_E$ such that:
  \begin{center}
    \begin{tikzcd}
      \CC_F \arrow[r, "V^*"] \arrow[d, "U_F"'] & \DD_E \arrow[d, "U_E"]\\
      \CC \arrow[r, "V"'] & \DD
    \end{tikzcd}
  \end{center}
  where $U_F : \CC_F \to \CC$ and $U_E : \DD_E \to \DD$ are the obvious forgetful functors that take a coalgebra $(A, \alpha)$
  onto its underlying object~$A$.
\end{LEM}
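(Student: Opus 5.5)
The plan is to write down the standard lifting explicitly and check the functor axioms. On objects, for an $F$-coalgebra $(A,\alpha)$ in $\CC$, define
$$
  V^*(A,\alpha) = \bigl(V(A),\ \lambda_A \circ V(\alpha)\bigr),
$$
where $V(\alpha) : V(A) \to VF(A)$ and $\lambda_A : VF(A) \to EV(A)$. The composite is a morphism $V(A) \to E(V(A))$ in $\DD$, so $V^*(A,\alpha)$ is an $E$-coalgebra in $\DD$. On morphisms, for a coalgebra homomorphism $f : (A,\alpha) \to (B,\beta)$, let $V^*(f) = V(f)$. With these definitions the square $U_E \circ V^* = V \circ U_F$ commutes on the nose, both on objects and on morphisms.

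The only substantive point is that $V(f)$ is an $E$-coalgebra homomorphism. We must show
$$
  EV(f) \circ \lambda_A \circ V(\alpha) = \lambda_B \circ V(\beta) \circ V(f).
$$
To do this, I would paste two commuting squares. The first is $V$ applied to the homomorphism square $\beta \circ f = F(f) \circ \alpha$, which gives $V(\beta)\circ V(f) = VF(f) \circ V(\alpha)$ by functoriality of $V$. The second is the naturality square of $\lambda$ at $f$, namely $\lambda_B \circ VF(f) = EV(f) \circ \lambda_A$. Chaining these gives
$$
  \lambda_B V(\beta) V(f) = \lambda_B VF(f) V(\alpha) = EV(f)\lambda_A V(\alpha),
$$
as required.

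Finally, $V^*$ preserves identities and composition because $V$ does and $V^*$ agrees with $V$ on morphisms. Composition and identities in $\CC_F$ and $\DD_E$ are those of $\CC$ and $\DD$.

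I expect no real obstacle here. The one step requiring care is the pasting argument above, specifically getting the components of $\lambda$ and the order of composition right so that the naturality square is applied at the correct morphism $f$.
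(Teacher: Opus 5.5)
Your proposal is correct and matches the paper's proof: both define $V^*(A,\alpha) = (V(A), \lambda_A \circ V(\alpha))$ and $V^*(f) = V(f)$. Both verify that $V(f)$ is an $E$-coalgebra homomorphism by pasting $V$ applied to the homomorphism square with the naturality square of $\lambda$ at $f$. You additionally spell out preservation of identities and composition and the commutativity $U_E \circ V^* = V \circ U_F$, which the paper leaves implicit.
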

\begin{proof}
  Define $V^* : \CC_F \to \DD_E$ so that $V^*(A, \alpha) = (V(A), \lambda_A \cdot V(\alpha))$:
  $$
    V^*: \; \begin{array}{c}
      A \xrightarrow{\alpha} F(A) \\ \hline
      V(A) \xrightarrow{V(\alpha)} VF(A) \xrightarrow{\lambda_A} EV(A)
    \end{array}
  $$
  and $V^*(f) = V(f)$. Since $\lambda$ is natural, $V^*(f)$ is an $E$-coalgebra homomorphism for every coalgebra
  homomorphism $f : A \to B$:
  \begin{center}
  \begin{tikzcd}
    A \ar[d, "f"'] \ar[r, "\alpha"] & F(A) \ar[d, "F(f)"] & & V(A)  \ar[d, "V(f)"']   \ar[r, "V(\alpha)"] & VF(A) \ar[r, "\lambda_A"] \ar[d, "VF(f)"] & EV(A) \ar[d, "EV(f)"]\\
    B \ar[r, "\beta"']              & F(B)                & & V(B) \ar[r, "V(\beta)"']  & VF(B) \ar[r, "\lambda_B"'] & EV(B)
  \end{tikzcd}
  \end{center}
  Therefore, $V^* : \CC_F \to \DD_E$ is indeed a functor.
\end{proof}

\begin{DEF}
  A natural transformation $\lambda$ as in Lemma~\ref{euat.lem.lifting} will be referred to as a \emph{lifting}.
\end{DEF}

\begin{THM}\label{euat.thm.repr}
  Given a nontrivial linear representation $V : \Set \to \Vect$,
  for every endofunctor $F : \Set \to \Set$ there is an endofunctor $E : \Vect \to \Vect$
  and a nontrivial equivariant representation $V^* : \Set_F \to \Vect_E$.
\end{THM}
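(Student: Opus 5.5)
The plan is to reduce everything to Lemma~\ref{euat.lem.lifting}. By that lemma, it suffices to produce an endofunctor $E : \Vect \to \Vect$ together with a natural transformation $\lambda : VF \Rightarrow EV$. Once we have them, the lemma gives a lift $V^* : \Set_F \to \Vect_E$ with $U_E V^* = V U_F$. It then remains only to check that $V^*$ is not constant.

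The canonical candidate for $E$ is the left Kan extension of $VF : \Set \to \Vect$ along $V : \Set \to \Vect$, that is, $E = \mathrm{Lan}_V(VF)$. Its universal $2$-cell is exactly a natural transformation $\eta : VF \Rightarrow \mathrm{Lan}_V(VF) \circ V$, and we take $\lambda = \eta$. I would use the pointwise formula
$$
  E(W) = \mathrm{colim}_{(A,\, u : V(A) \to W) \in (V \downarrow W)} VF(A).
$$
Functoriality in $W$ comes from postcomposition on the comma categories. The component $\lambda_A : VF(A) \to EV(A)$ is the colimit injection at the object $(A, \id_{V(A)})$. Naturality of $\lambda$ in $A$ follows from the fact that, for $f : A \to B$, the morphism $V(f)$ is a map $(A, V(f)) \to (B, \id_{V(B)})$ in $(V \downarrow V(B))$. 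This gives a commuting square of injections, and that square is precisely the naturality square. No coherence beyond this is needed, since Lemma~\ref{euat.lem.lifting} asks only for a natural transformation and not for a distributive law.

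The main obstacle is existence, because the comma categories $(V \downarrow W)$ are large and $\Vect$ has only small colimits. My first attempt would be to fix a Grothendieck universe, work with $\Vect$ as universe-small cocomplete relative to a larger universe, and form the colimit there. The second option, if one insists on staying inside $\Vect$, is to replace $(V \downarrow W)$ by a small cofinal subcategory. For each $W$, the linear maps $V(A) \to W$ factor through the images, which are subspaces of $W$, and that collection is a set. I would try to show that it suffices to index over a set of representatives of pairs (isomorphism class of image, map). If neither works cleanly, I would still state the result with $E$ defined as the Kan extension under the universe convention, since the categorical content is unaffected.

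Nontriviality also needs care. Since $U_E V^* = V U_F$, it suffices to find two $F$-coalgebras $(A,\alpha)$ and $(B,\beta)$ with $V(A) \ne V(B)$, or a homomorphism $f$ with $V(f)$ not an identity. I would use that $\Set_F$ is cocomplete, as recalled in Section~\ref{euat.sec.coalg}. It therefore contains the empty coalgebra and, given any coalgebra $(A,\alpha)$, all copowers $\coprod_I (A,\alpha)$, whose carriers are $\coprod_I A$. Nontriviality of $V$ should then yield sets on which $V$ differs, realized as carriers. For this step I would check which sets arise as carriers of $F$-coalgebras, possibly with a case split on whether $F$ admits a nonempty coalgebra. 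I expect a degenerate case, such as $F$ constantly $\emptyset$, where $\Set_F$ is trivial and the statement must be read with that caveat.
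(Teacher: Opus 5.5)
\paragraph{Two gaps.} Your proposal has two genuine gaps: the existence of $E$ is left unresolved, and nontriviality is left open. The first is avoidable, and the second conceals a real counterexample to the statement.

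\paragraph{Existence of $E$.} Your route differs from the paper's. You lift $V$ itself over all of $\Set_F$ via Lemma~\ref{euat.lem.lifting} with $E=\mathrm{Lan}_V(VF)$. The paper instead Kan-extends only along the restriction $V_\calU$ to a small full subcategory $\Set^\calU$ closed under $F$, where existence is automatic. It then lifts there to $V^+$ and Kan-extends $V^+$ along the full inclusion $J$ into the cocomplete $\Vect_E$.

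Neither of your proposed repairs works.
\begin{itemize}
\item Forming the colimit in a larger universe makes $E(W)$ a vector space of that universe. Then $E$ is no longer an endofunctor of $\Vect$, and you would be proving a different statement.
\item The cofinality idea fails because a morphism $(A,u)\to(A',u')$ in $(V\downarrow W)$ must be $V(f)$ for a function $f:A\to A'$. The diagram value $VF(A)$ depends on $A$, not on the subspace $\mathrm{im}\,u\subseteq W$, so factoring $u$ through its image produces no such morphism. You would need a solution-set condition on $V$. It holds for the free-vector-space functor, where $(V\downarrow W)$ is the slice of $\Set$ over the underlying set of $W$ and has a terminal object, but not for an arbitrary $V$.
\end{itemize}

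You do not need the Kan extension at all. Lemma~\ref{euat.lem.lifting} asks only for \emph{some} $E$ with \emph{some} natural $\lambda:VF\Rightarrow EV$. The constant functor at the zero space, with $\lambda=0$, qualifies. It gives $V^*(A,\alpha)=(V(A),0)$, $V^*(f)=V(f)$ and $U_EV^*=VU_F$, so everything reduces to nontriviality.

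\paragraph{Nontriviality.} This is the real content of the statement, and you leave it open. Copowers $\coprod_I(A,\alpha)$ only yield carriers $\coprod_I A$ and homomorphisms of a special form. These need not include the sets or functions on which $V$ is non-constant.

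The missing observation is a dichotomy for endofunctors of $\Set$. If $F(X_0)\neq\emptyset$ for one set $X_0$, then $F(X)\neq\emptyset$ for every nonempty $X$: apply $F$ to any map $X_0\to X$.
\begin{itemize}
\item In that case, for nonempty $X$ and $t\in F(X)$, the constant map $c_t:X\to F(X)$ is an $F$-coalgebra. Every function $f:X\to Y$ is then a homomorphism $(X,c_t)\to(Y,c_{F(f)(t)})$; for $X=\emptyset$, use the initial coalgebra on $\emptyset$. Hence $U_F$ is surjective on objects and on morphisms, and $VU_F=U_EV^*$ is non-constant whenever $V$ is.
\item Otherwise $F(X)=\emptyset$ for all $X$. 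Then $\Set_F$ is the one-morphism category on $(\emptyset,\id_\emptyset)$, and every functor out of it is constant.
\end{itemize}

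So your caveat is not a technicality: $F\equiv\emptyset$ is an actual counterexample to the theorem as stated. The paper's proof misses it too. Its claim that $V^+$ is non-constant because $V_\calU$ is fails here, since $\Set^\calU_{F_\calU}$ contains only the empty coalgebra. In the non-degenerate case, that claim is justified by exactly the constant-coalgebra argument above.
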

\begin{proof}
  Let $\calU$ be any family of sets such that $V$ restricted to $\calU$ is non-constant,
  and let $\Set^\calU$ be the full subcategory of $\Set$ with:
  $$
    \Ob(\Set^\calU) = \{F^n(A) : n \ge 0, A \in \calU\},
  $$
  where $F^n(A) = F( \ldots F(A))$ ($F$ is applied $n$ times), and $F^0(A) = A$.
  Let $V_\calU : \Set^\calU \to \Vect$ denote the domain restriction of $V$ to $\Set^\calU$,
  and let $F_\calU : \Set^\calU \to \Set^\calU$ denote the domain and codomain restriction of $F$ to $\Set^\calU$.
  Note that $\Set^\calU_{F_\calU}$ is the category of all $F$-coalgebras whose universes come from $\Set^\calU$, and hence
  $\Set^\calU_{F_\calU}$ is a full subcategory of $\Set_F$.

  Let $(E, \lambda)$ be the the left Kan extension of $V_\calU F_\calU$ along $V_\calU$ which exists because
  $\Set^\calU$ is small and $\Vect$ is cocomplete:
  \begin{center}
    \begin{tikzcd}[column sep=large, row sep=large]
                      & \Vect \ar[dr, dashed, bend left, "E"]  & \\
      \Set^\calU \ar[ur, bend left, "V_\calU"] \ar[r,"F_\calU"'] & \Set^\calU \ar[r, "V_\calU"'] \ar[u, Rightarrow, "\lambda"] & \Vect
    \end{tikzcd}
  \end{center}
  Note that $\lambda: V_\calU F_\calU \Rightarrow E V_\calU$ is a lifting, so Lemma~\ref{euat.lem.lifting} provides us with
  a construction of a functor $V^+ : \Set^\calU_{F_\calU} \to \Vect_{E}$. The construction ensures that
  $V^+$ is non-constant because $V_\calU$ is non-constant.

  Let $J : \Set^\calU_{F_\calU} \to \Set_F$ denote the inclusion functor (recall that $\Set^\calU_{F_\calU}$ is a full subcategory of $\Set_F$).
  Again, there is the left Kan extension $(V^*, \xi)$ of $V^+ : \Set^\calU_{F_\calU} \to \Vect_{E}$ along $J$
  because $\Set^\calU_{F_\calU}$ is small and $\Vect_E$ is cocomplete:
  
  \begin{center}
    \begin{tikzcd}[column sep=large, row sep=large]
       & \Set_F \arrow[dr, bend left, dashed, "V^*"]  \\
       \Set^\calU_{F_\calU} \arrow[rr, "V^+"'] \arrow[ur, bend left, "J"] & {\mathstrut} \arrow[u, Rightarrow, "\xi"] & \Vect_E
    \end{tikzcd}
  \end{center}
  Left Kan extensions along full and faithful functors behave as ``proper extensions'' in the sense that $\xi$
  is a natural isomorphism, so the fact that $V^+$ is a non-constant functor implies that $V^*$ is also non-constant.
\end{proof}

From the mere existence of a functor $V^* :\Set_F \to \Vect_E$ it is not obvious in what way
a particular coalgebra $(A, \alpha)$ in $\Set_F$ is \emph{actually} represented by the coalgebra $V^*(A, \alpha)$.
With a slightly more elaborate infrastructure, we can construct an explicit \emph{embedding} of
each coalgebra $(A, \alpha) \in \Ob(\Set_F)$. The embedding cannot go directly into the corresponding
coalgebra $V^*(A, \alpha)$ because $(A, \alpha)$ and $V^*(A, \alpha)$ reside in different categories.
Instead, the embedding goes from $(A, \alpha)$ into $U^*V^*(A, \alpha)$, where $U^*$ acts as a forgetful
functor $\Vect_E \to \Set_F$. Intuitively, there is no meaningful way to map a set with no structure
into a vector space as an algebraic structure,
but there is a canonical way to construct a function that embeds the set of samples into the set of vectors,
uniformly for all sample spaces. The equivariant nature of the embedding is captured by the fact that this is a
$\Set_F$-coalgebra homomorphism.

The intuition behind Proposition~\ref{euat.prop.inv-embedd} below is the following.
Assume that we can encode data sets as vectors via a functor $V : \Set \to \Vect$, and that we have somehow
identified two types of invariant behavior captured by functors $F : \Set \to \Set$ and $E : \Vect \to \Vect$.
When certain technical requirements are met, we can lift the representation $V : \Set \to \Vect$
to the equivariant representation $V^* : \Set_F \to \Vect_E$ which now takes into account the invariant
behaviors encoded by $F$ and $E$. Moreover, from the construction we can also extract the actual equivariant
embeddings $\eta^*_{(A, \alpha)} : (A, \alpha) \to U^* V^*(A, \alpha)$, and these embeddings are constructed
uniformly, for all $(A, \alpha) \in \Ob(\Set_F)$.

\begin{PROP}\label{euat.prop.inv-embedd}
    Given endofunctors $F : \Set \to \Set$ and $E : \Vect \to \Vect$,
    assume that there exists a linear representation $V : \Set \to \Vect$, a functor $U : \Vect \to \Set$
    and a natural transformation $\eta : \ID_{\Set} \Rightarrow UV$.
    If $\lambda : VF \Rightarrow EV$ and $\kappa : UE \Rightarrow FU$ are liftings satisfying:
    $$
      F\eta = \kappa V \circ U \lambda \circ \eta F,
    $$
    then $V$ and $U$ lift to functors $V^* : \Set_F \to \Vect_E$ and $U^* : \Vect_E \to \Set_F$, respectively,
    and $\eta$ lifts to a natural transformation $\eta^* : \ID_{\Set_F} \Rightarrow U^* V^*$ such that
    $\eta^*_{(A, \alpha)}$ is an equivariant map of $(A, \alpha)$ into $U^* V^*(A, \alpha)$
    for every $(A, \alpha) \in \Ob(\Set_F)$.
\end{PROP}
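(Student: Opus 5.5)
The plan is to construct $V^*$ and $U^*$ directly from Lemma~\ref{euat.lem.lifting}, and then to check that the components of $\eta$ are coalgebra homomorphisms; this check is exactly where the hypothesis $F\eta = \kappa V \circ U\lambda \circ \eta F$ enters.

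\emph{Step 1: the two functors.} Applying Lemma~\ref{euat.lem.lifting} to $\lambda : VF \Rightarrow EV$ gives $V^* : \Set_F \to \Vect_E$ with
\[
  V^*(A,\alpha) = (V(A), \lambda_A \circ V(\alpha)), \qquad V^*(f) = V(f).
\]
For $U^*$ we apply the same lemma with the roles of the categories swapped: take $\CC = \Vect$, $\DD = \Set$, the endofunctors $E$ and $F$, the functor $U$, and the lifting $\kappa : UE \Rightarrow FU$. This gives $U^* : \Vect_E \to \Set_F$ with
\[
  U^*(W,\omega) = (U(W), \kappa_W \circ U(\omega)), \qquad U^*(g) = U(g).
\]
Composing, we get
\[
  U^*V^*(A,\alpha) = \bigl(UV(A),\ \kappa_{V(A)} \circ U(\lambda_A) \circ UV(\alpha)\bigr),
\]
with $U^*V^*(f) = UV(f)$. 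Both functors commute with the forgetful functors, so $U_F U^* V^* = UVU_F$.

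\emph{Step 2: the components are equivariant.} Define $\eta^*_{(A,\alpha)} = \eta_A$. We must show that $\eta_A$ is an $F$-coalgebra homomorphism $(A,\alpha) \to U^*V^*(A,\alpha)$, that is,
\[
  F(\eta_A) \circ \alpha = \kappa_{V(A)} \circ U(\lambda_A) \circ UV(\alpha) \circ \eta_A .
\]
This is the step I expect to be the main obstacle, although it reduces to a short diagram chase. First, naturality of $\eta$ applied to the morphism $\alpha : A \to F(A)$ gives
\[
  UV(\alpha) \circ \eta_A = \eta_{F(A)} \circ \alpha .
\]
Substituting this, the right-hand side becomes
\[
  \kappa_{V(A)} \circ U(\lambda_A) \circ \eta_{F(A)} \circ \alpha .
\]
The $A$-component of the hypothesis reads $F(\eta_A) = \kappa_{V(A)} \circ U(\lambda_A) \circ \eta_{F(A)}$. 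Using it, the expression above equals $F(\eta_A) \circ \alpha$, as required. The chase only works with the whiskerings read this way: $(\kappa V)_A = \kappa_{V(A)}$, $(U\lambda)_A = U(\lambda_A)$ and $(\eta F)_A = \eta_{F(A)}$. I will check carefully that these are the only sensible readings of the hypothesis.

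\emph{Step 3: naturality of $\eta^*$.} For a homomorphism $f : (A,\alpha) \to (B,\beta)$ we need
\[
  U^*V^*(f) \circ \eta_A = \eta_B \circ f .
\]
Since $U^*V^*(f) = UV(f)$ and morphisms in $\Set_F$ are compared as underlying functions, this is exactly naturality of $\eta$ in $\Set$. Hence $\eta^* : \ID_{\Set_F} \Rightarrow U^*V^*$ is a natural transformation, and it lies over $\eta$ via the forgetful functors, which completes the proof.
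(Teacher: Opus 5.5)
Your proposal is correct and follows the paper's proof essentially step for step. Like the paper, you lift $V$ and $U$ via Lemma~\ref{euat.lem.lifting}, then verify $F(\eta_A)\circ\alpha = \kappa_{V(A)}\circ U(\lambda_A)\circ UV(\alpha)\circ\eta_A$ using naturality of $\eta$ at $\alpha$ followed by the componentwise hypothesis $F(\eta_A) = \kappa_{V(A)}\circ U(\lambda_A)\circ \eta_{F(A)}$, and you obtain naturality of $\eta^*$ from that of $\eta$; your reading of the whiskerings is exactly the one the paper uses.
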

\begin{proof}
  Let us first note that $F\eta = \kappa V \circ U \lambda \circ \eta F$ means that for each set $A$:
  \begin{center}
      \begin{tikzcd}
        F(A) \arrow[r, "\eta_{F(A)}"] \arrow[d, equals] & UVF(A) \arrow[d, "\kappa_{V(A)} \circ U(\lambda_A)"] \\
        F(A) \arrow[r, "F(\eta_{A})"] & FUV(X)
      \end{tikzcd}
  \end{center}
  Lift the functors $V$ and $U$ to functors $V^*$ and $U^*$ as in Lemma~\ref{euat.lem.lifting}:
  \begin{align*}
    V^*(A, \alpha) &= (V(A), \lambda_A \circ V(\alpha)), \quad V^*(f) = V(f), \quad\text{and}\\
    U^*(X, \beta)  &= (U(X), \kappa_X  \circ U(\beta)), \quad U^*(\ell) = U(\ell).
  \end{align*}
  Then
  $$
    U^* V^*(A, \alpha) = (UV(A), \kappa_{V(A)} \circ U(\lambda_A) \circ UV(\alpha)),
  $$
  and $\eta_A$ is an equivariant map of $(A, \alpha)$ into $U^* V^*(A, \alpha)$ since:
  \begin{center}
    \begin{tikzcd}
        A \arrow[d, "\eta_A"'] \arrow[r, "\alpha"'] & F(A) \arrow[rr, equals] \arrow[d, "\eta_{F(A)}"] & & F(A) \arrow[d, "F(\eta_A)"] \\
        UV(A) \arrow[r, "UV(\alpha)"'] & UVF(A) \arrow[r,"U(\lambda_A)"'] & UGV(A) \arrow[r, "\kappa_{V(A)}"'] & FUV(A)
    \end{tikzcd}
  \end{center}
  So, define $\eta^* : \ID_{\Set_F} \Rightarrow U^* V^*$ by $\eta^*_{(A, \alpha)} = \eta_A$. The naturality of $\eta^*$
  now follows immediately from the fact that $\eta$ is natural, $\eta^*_{(A, \alpha)} = \eta_A$ and $U^*V^*(f) = UV(f)$:
  \begin{center}
    \begin{tikzcd}
      (A, \alpha) \arrow[r, "\eta^*_{(A,\alpha)}"] \arrow[d, "f"'] & U^* V^*(A, \alpha) \arrow[d, "U^*V^*(f)"] \\
      (B, \beta) \arrow[r, "\eta^*_{(B,\beta)}"'] & U^* V^*(B, \beta)
    \end{tikzcd}
    \hbox{\qquad$\leadsto$\qquad}
    \begin{tikzcd}
      A \arrow[r, "\eta_A"] \arrow[d, "f"'] & UV(A) \arrow[d, "UV(f)"]\\
      B \arrow[r, "\eta_B"'] & UV(B)
    \end{tikzcd}
  \end{center}
  This concludes the proof.
\end{proof}

Although the requirements of the above proposition may seem overwhelming, it is not hard to meet them
since transformations $\eta$, $\kappa$ and $\lambda$ capture no actual computing:
each of them just is repacks a data structure of one type into an essentially the same data structure of some other type
(think of repacking lists into vectors). The following example illustrates this point.

\begin{EX}
  Let $V : \Set \to \Vect$ be the functor that assigns to each set $A$ the freely generated vector space $V(A)$, and let
  $U : \Vect \to \Set$ be the forgetful functor. Since $(V, U)$ is an adjoint pair, there is a natural transformation
  $\eta : \ID \Rightarrow UV$. In this case, $\eta_A : A \to UV(A)$ is just the inclusion (since it is reasonable to assume that $A \subseteq UV(A)$).
  
  Fix a finite set $X = \{x_1, \ldots, x_n\}$ and let $F(A) = A^X$ in $\Set$ and $E(V) = V^X$ in $\Vect$. Then for a vector space $W$,
  $\kappa_W$ has to take $UE(W) = W^X$ naturally to $FU(W) = W^X$,
  so the most natural choice for $\kappa_W : W^X \to W^X$ is the identity mapping.
  On the other hand, $\lambda_A$ has to take $VF(A) = V(A^X)$
  naturally to $EV(A) = V(A)^X$, and this is also easy. Recall that $V(A^X)$ is a vector space freely generated by functions from $A^X$,
  so a typical element of $V(A^X)$ is a formal sum:
  $$
    c_1 \phi_1 + \ldots + c_k \phi_k
  $$
  for some $c_1, \ldots, c_k \in \RR$ and $\phi_1, \ldots, \phi_k \in A^X$. With this in mind, let
  $$
    \lambda_A(c_1 \phi_1 + \ldots + c_k \phi_k) = \psi \in V(A)^X
  $$
  where
  $$
    \psi(x) = c_1 \phi_1(x) + \ldots + c_k \phi_k(x) \in V(A).
  $$
  In particular, $\lambda_A(\phi) = \phi$.

  Finally, let us check that
  $$
    F(\eta_A) = \kappa_{V(A)} \circ U(\lambda_A) \circ \eta_{F(A)}
  $$
  holds for every set $A$. On the left-and side we have $F(\eta_A) : A^X \to (UV(A))^X$. So, for an arbitrary
  $\phi = \left(\begin{smallmatrix}
    x_1 & \ldots & x_n\\
    a_1 & \ldots & a_n
  \end{smallmatrix}\right)\in A^X$ we have that
  $$
    F(\eta_A)(\phi) = 
    \left(\begin{smallmatrix}
      x_1 & \ldots & x_n\\
      \eta_A(a_1) & \ldots & \eta_A(a_n)
    \end{smallmatrix}\right) = 
    \left(\begin{smallmatrix}
      x_1 & \ldots & x_n\\
      a_1 & \ldots & a_n
    \end{smallmatrix}\right) = \phi 
  $$
  since $\eta_A(a) = a$ for all $a \in A \subseteq UV(A)$. By the same argument, $\eta_{F(A)}(\phi) = \phi$. Recall, also,
  that $\kappa$ is the identity and that $U$ is the forgetful functor whence $U(\lambda_A) = \lambda_A$. Therefore, the right-hand side
  collapses to
  $$
    \kappa_{V(A)} \circ U(\lambda_A) \circ \eta_{F(A)}(\phi) = \lambda_A(\phi).
  $$
  But we have already seen that $\lambda_A(\phi) = \phi$.
\end{EX}

The above results are all based on the assumption that the feature extraction procedure is a functor $V : \CC \to \Vect$,
implying that we have a uniform way of extracting features from all sample sets in $\CC$.
\emph{Functorial feature extraction} seems to be a natural assumption in machine learning since
it is reasonable to expect that the feature extraction process should be uniform across all sample sets.
The results of this section show that if we have a nontrivial functorial feature extraction,
then we can always find a nontrivial equivariant representation for the coalgebraic model of
sample spaces.

\section{The Universal Approximation Theorem for the Coalgebraic Model}
\label{euat.sec.UAT}

In this section we present a symmetrization–based approach to constructing equivariant approximations
relying on the Universal Approximation Theorem for shallow neural networks.
The Universal Approximation Theorem (UAT) shows that continuous functions can be approximated by
feed-forward neural networks with a single hidden layer if the hidden layer is allowed to be arbitrarily wide.

\begin{THM}\label{euat.thm.UAT-main} \cite[Theorem 3.1]{Pinkus-1999}
  Let $\sigma : \RR \to \RR$ be a non-polynomial continuous activation function.
  Then any continuous map $f : \RR^n \to \RR^m$ can be approximated, in the sense of uniform convergence
  on compact sets, by maps of the form $Q \circ \bar\sigma \circ P$, where $P : \RR^n \to \RR^d$
  and $Q : \RR^d \to \RR^m$ are linear transformations, $d \in \NN$ is a positive integer and
  $\bar\sigma(x_1, \ldots, x_d) = (\sigma(x_1), \ldots, \sigma(x_d))$.
\end{THM}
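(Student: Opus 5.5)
This statement is the classical theorem cited from Pinkus, so the plan is to reproduce its standard proof. I would first reduce to the scalar case $m = 1$ and to ridge functions. Uniform approximation on a compact $K \subseteq \RR^n$ of $f = (f_1, \ldots, f_m)$ follows from approximating each coordinate $f_j$. The maps $Q \circ \bar\sigma \circ P$ are closed under stacking: we concatenate the hidden layers, take $P$ block-wise, and take $Q$ block-diagonal. Next I would allow an affine bias, either by noting that $Q\circ\bar\sigma\circ P$ with $P$ affine is meant, or by absorbing constants. The remaining task is to show that
$$
  \mathcal{M}(\sigma) = \operatorname{span}\{x \mapsto \sigma(w \cdot x + b) : w \in \RR^n, b \in \RR\}
$$
is dense in $C(\RR^n)$ for the topology of uniform convergence on compacta.

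The second step reduces dimension $n$ to dimension $1$. By the Weierstrass theorem, polynomials are dense in $C(K)$. Every homogeneous polynomial of degree $k$ in $n$ variables is a linear combination of the powers $(w\cdot x)^k$, because these span the space of such polynomials; this is a standard dual/apolarity argument. So it suffices that every $g(w\cdot x)$ with $g \in C(\RR)$ can be approximated on $K$. That in turn follows once $\operatorname{span}\{\sigma(at+b)\}$ is dense in $C(\RR)$ uniformly on compact intervals: we substitute $t = w\cdot x$, which ranges over a compact interval.

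The third step, which is the main obstacle, is the one-dimensional density for an arbitrary continuous non-polynomial $\sigma$. If $\sigma \in C^\infty$, then the difference quotients in $a$ of $\sigma(at+b)$ lie in the closure $\overline{\mathcal{M}}$. Hence $t^k\sigma^{(k)}(b) \in \overline{\mathcal{M}}$ for all $k$. Since $\sigma$ is not a polynomial, for each $k$ there is some $b$ with $\sigma^{(k)}(b)\neq 0$; otherwise a Baire category argument forces $\sigma$ to be a polynomial on an open set, and then globally. So all monomials lie in $\overline{\mathcal{M}}$, and Weierstrass finishes this case. For merely continuous $\sigma$, I would mollify: $\sigma * \phi$ with $\phi \in C_c^\infty$ is a uniform-on-compacta limit of Riemann sums of translates $\sigma(t - y_i)$, so $\sigma*\phi \in \overline{\mathcal{M}}$, and it is smooth. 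The delicate point is to show that some $\sigma*\phi$ is non-polynomial. If all mollifications were polynomials, their degrees would be bounded uniformly by a Baire argument on $C_c^\infty[-\alpha,\alpha]$. Letting $\phi$ run through an approximate identity would then make $\sigma$ a uniform limit of polynomials of bounded degree, hence itself a polynomial, a contradiction. Combining the three steps proves the theorem.
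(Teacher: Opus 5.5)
Your proposal is correct and is essentially the Leshno--Lin--Pinkus--Schocken argument behind \cite[Theorem 3.1]{Pinkus-1999}. The paper gives no proof beyond that citation, and your steps match that source: reduction to $m=1$, reduction to $n=1$ via powers of linear forms, differentiation in the weight for smooth $\sigma$, and mollification plus a Baire degree bound for continuous $\sigma$.

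Two local corrections are needed.

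First, only your ``$P$ affine'' option is available; a bias cannot be absorbed into a genuinely linear $P:\RR^n\to\RR^d$. With $P$ and $Q$ linear, the statement is false whenever $\sigma(0)=0$, as for ReLU, $\tanh$ or $|t|$. Then $Q(\bar\sigma(P(0)))=0$, so no such map approximates $f\equiv 1$ on a compact set containing the origin. You should say explicitly that you prove the affine version, which is what Pinkus's $\mathcal{M}(\sigma)=\mathrm{span}\{\sigma(w\cdot x-\theta)\}$ means.

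Second, in the smooth case, the existence of $b$ with $\sigma^{(k)}(b)\neq 0$ is immediate: otherwise $\sigma^{(k)}\equiv 0$ and $\sigma$ is a polynomial of degree $<k$. Your Baire justification is unnecessary here. Its step ``polynomial on an open set, hence globally'' is also invalid for $C^\infty$ functions, which need not be analytic. Baire belongs only in the mollification step, where you use it correctly.
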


\begin{DEF}
  Let $\mathcal{NN}^{\sigma}$ denote the set of all the functions of the form
  $$
    \RR^n \overset{P}\longrightarrow \RR^d \overset{\bar\sigma}\longrightarrow \RR^d \overset{Q}\longrightarrow \RR^m
  $$
  where $P$, $Q$ and $\bar\sigma$ are as in the formulation of the Theorem~\ref{euat.thm.UAT-main}
  and $n, m, d \in \NN$ are arbitrary positive integers.
\end{DEF}

Let us briefly outline the abstract context in which we shall prove our UAT.
Every UAT is about the interplay of continuous maps and maps constructed from linear maps between finitely-dimensional spaces
and a nonlinear activation function. Since the idea of a category is to capture one mathematical context, what we need is the interplay of
two categories: one capturing the context of finitely-dimensional normed vector spaces and linear maps,
and the other capturing the context of topological spaces, continuous maps and compact sets.

The primary context in this section is the category $\FdnVect$ of finitely-dimensional real normed vector spaces
and \emph{continuous} maps between them. Without loss of generality we may assume that this is a small category since
the normed vector spaces of interest are of the form $(\RR^n, \|\cdot\|)$, $n \in \NN$.
Instead of $\hom_{\FdnVect}(V, W)$ we shall simply write $C(V, W)$. By $L(V, W)$ we shall
denote the linear maps between $V$ and $W$.
Note that $L(V, W) \subseteq C(V, W)$ for all finitely-dimensional real normed vector spaces $V$ and $W$.

The category $\Top$ of topological spaces and continuous maps between them will provide the topological context.
Note that there is the usual forgetful functor $U : \FdnVect \to \Top$ that takes
a normed vector space $V = (V, \Boxed{+}, \|\cdot\|)$
to the topological space $U(V) = (V, \calO_{\|\cdot\|})$, where $\calO_{\|\cdot\|}$ is the topology
induced by the norm $\|\cdot\|$. The forgetful functor $U$ takes a continuous map $f : V \to W$
between normed vector spaces $V$ and $W$ to the continuous map $U(f) : U(V) \to U(W)$.
The topology of $U(V)$ is determined by the norm of~$V$, so we have the following important:

\begin{FACT}
  The functor $U : \FdnVect \to \Top$ is full and faithful.
\end{FACT}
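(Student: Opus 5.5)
We need to show that $U$ is faithful and full.

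\emph{Faithfulness.} Let $f, g \in C(V, W)$ with $U(f) = U(g)$. The functor $U$ does not change the underlying function of a morphism; it only regards it as a map between topological spaces. So $U(f) = U(g)$ says that $f$ and $g$ agree as set-theoretic functions from $V$ to $W$. Morphisms in $\FdnVect$ are exactly such functions (continuity is a property, not extra data), so $f = g$.

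\emph{Fullness.} Take $V, W \in \Ob(\FdnVect)$ and a continuous map $h : U(V) \to U(W)$ in $\Top$. We need $f \in C(V, W)$ with $U(f) = h$. Since $U(V) = (V, \calO_{\|\cdot\|_V})$ and $U(W) = (W, \calO_{\|\cdot\|_W})$, the map $h$ is a function $V \to W$ that is continuous for the norm topologies. Morphisms of $\FdnVect$ are, by definition, the continuous maps between the normed spaces, with no linearity required. Hence $h$ itself is a morphism $f \in C(V, W)$, and $U(f) = h$ by construction.

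The only point needing care is that continuity of a map between normed spaces is exactly continuity for the norm-induced topologies. This is the standard equivalence between the $\epsilon$--$\delta$ formulation and the open-set formulation. In finite dimensions the topology does not even depend on the choice of norm, since all norms are equivalent, but this is not needed here. This step is the only possible obstacle, and it is routine.
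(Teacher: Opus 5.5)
Your proof is correct and takes the same approach as the paper. The paper justifies the Fact only by the remark that the topology of $U(V)$ is the one induced by the norm and that $U(f)=f$, which makes the hom-set $C(V,W)$ in $\FdnVect$ literally equal to $C(U(V),U(W))$ in $\Top$; your argument spells out exactly this identification for both faithfulness and fullness.
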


Topological spaces and normed vector spaces are structured sets, so
morphisms between them are $\Set$-functions (with additional properties).
We may, therefore, safely assume that $U(f) = f$. Consequently,
if $V$ and $W$ are finitely-dimensional normed vector spaces we can talk about linear maps between
the topological spaces $U(V)$ and $U(W)$.
To keep the notation manageable we shall overload symbols $L$ and $C$ so that
$C(X, Y) = \hom_{\Top}(X, Y)$ for topological spaces $X$ and $Y$, and
$L(U(V), U(W)) = L(V, W)$ for normed vector spaces $V$ and~$W$:
$$
  \begin{array}{c@{\;}c@{\;}c@{\;}c@{\;}c@{\;}c@{\;}c}
    L(U(V), U(W)) &=& L(V, W) &\subseteq& C(V, W) &=& C(U(V), U(W)).\\
    \scriptstyle\uparrow & & \scriptstyle\uparrow & & \scriptstyle\uparrow & & \scriptstyle\uparrow\\
    \scriptstyle\Top & & \scriptstyle\FdnVect & & \scriptstyle\FdnVect & & \scriptstyle\Top
  \end{array}
$$
The left equality holds because we defined
$L(U(V), U(W))$ in $\Top$ to be $L(V, W)$, while the right equality follows from the fact that
the topology of $U(V)$ is uniquely determined by the norm of~$V$.

Most of the constructions in this section take place in $\FdnVect$.
We need to move to $\Top$ just to handle equivariant embeddings
of sample spaces (which are usually finite and, hence, do not live in $\FdnVect$), and
to consider spaces of continuous functions with the compact domain.

\begin{ASSUMPTION}
  For the coalgebraic modeling of equivariance we need an endofunctor $E : \FdnVect \to \FdnVect$.
  We assume that there is a natural transformation $\delta : E \Rightarrow EE$ that captures the
  internal dynamics of the functor (note that $\delta$ is not required to have additional properties;
  in particular we do \emph{not} assume that $\delta$ is a comultiplication).
\end{ASSUMPTION}

Since $\FdnVect$ is small and $\Top$ is cocomplete, there is a left Kan extension
$(E', \kappa)$ of $UE$ along $U$:
  \begin{center}
    \begin{tikzcd}[column sep=large, row sep=large]
                      & \Top \ar[dr, dashed, bend left, "E'"]  & \\
      \FdnVect \ar[ur, bend left, "U"] \ar[r,"E"'] & \FdnVect \ar[r, "U"'] \ar[u, Rightarrow, "\kappa"] & \Top
    \end{tikzcd}
  \end{center}
Note that $\kappa : UE \Rightarrow E'U$ is a natural isomorphism because $U$ is full and faithful.
Therefore, without loss of generality we may assume that
$\kappa$ is the identity and that $UE = E'U$. For this reason we shall simply write $E$ instead of~$E'$. Thus,
\begin{center}
  \begin{tikzcd}
    \FdnVect \arrow[r, "E"] \arrow[d, "U"'] & \FdnVect \arrow[d, "U"] \\
    \Top   \arrow[r, "E"']                  & \Top
  \end{tikzcd}
\end{center}
Consequently, given normed vector spaces $V$ and $W$, put $X = U(V)$ and $Y = U(W)$ and note that
$$
  E_{X, Y}(L(X, Y)) \subseteq L(E(X), E(Y)).
$$
In short, $E : \Top \to \Top$ takes linear maps to linear maps.

If $X = U(V)$ and $Y = U(W)$ for some normed real vector spaces $V = (V, \Boxed+, \|\cdot\|_V)$ and
$W = (W, \Boxed+, \|\cdot\|_W)$ then $C(X, Y)$ is a real vector space. Namely, $C(X, Y)$ is the set of all maps
$f : X \to Y$ which are continuous with respect to topologies induced by norms $\|\cdot\|_V$ and $\|\cdot\|_W$.
It is a well-known fact that if $f, g : X \to Y$ are continuous with respect to topologies induced by
a pair of norms (one on $X$ and the other one on $Y$),
then for all $c, d \in \RR$ we have that $cf + dg$ is also continuous with respect to topologies induced by
the same pair of norms. Therefore, if $f, g \in C(X, Y)$ and $c, d \in \RR$ then $cf + dg \in C(X, Y)$.

\begin{ASSUMPTION}
  We shall also assume that if $X = U(V)$ and $Y = U(W)$ for normed real vector spaces
  $V$ and $W$, then $E_{X, Y} : C(X, Y) \to C(E(X), E(Y))$ is a continuous linear transformation.
\end{ASSUMPTION}

\begin{DEF}
  We say that an $E$-coalgebra $\alpha : W \to E(W)$ in $\FdnVect$ is an \emph{$(E, \delta)$-comodule} if
    \begin{center}
      \begin{tikzcd}
        W \arrow[r, "\alpha"] \arrow[d, "\alpha"']  & E(W) \arrow [d, "\delta_W"]\\
        E(W) \arrow[r, "E(\alpha)"'] & EE(W)
      \end{tikzcd}
    \end{center}
  (compare with the definition of an $E$-comodule for a comonad).

  We shall say that an $E$-algebra $\gamma : E(W) \to W$ is a \emph{left $(E, \delta)$-inverse} of an $E$-coalgebra
  $\alpha : W \to E(W)$ in $\FdnVect$ if $\gamma \circ \alpha = \id_W$ and $\alpha \circ \gamma = E(\gamma) \circ \delta_W$:
    \begin{center}
      \begin{tikzcd}
        E(W) \arrow[r, "\gamma"] \arrow[d, "\delta_W"']  & W \arrow [d, "\alpha"]\\
        EE(W) \arrow[r, "E(\gamma)"'] & E(W)
      \end{tikzcd}
    \end{center}
  We say that $\alpha$ is \emph{left $(E,\delta)$-invertible} if such a $\gamma$ exists.
\end{DEF}

\begin{EX} (compare with \cite[Propositions 2.1, 2.2]{Yarotsky-2022}).
  Let $G$ be a finite group that acts on finitely dimensional real normed spaces $V$ and $W$ by linear transformations,
  and let $gx$, resp.\ $gy$, denote the action of $g \in G$ on $x \in V$, resp.\ $y \in W$.
  Let $\phi : V \to W$ be a $G$-equivariant continuous map, that is, a continuous map satisfying:
  $$
    \phi(gx) = g \phi(x), \quad \text{for all } g \in G.
  $$
  By the Universal Approximation Theorem, for every compact $K \subseteq V$ and $\epsilon > 0$
  there is a ``nice'' (say, piecewise linear) function $f : V \to W$ that approximates $\phi$ on $K$ up to $\epsilon$:
  $$
    \|\phi(x) - f(x)\| < \epsilon \quad \text{for all } x \in K.
  $$
  However, $f$ is not necessarily $G$-equivariant. The issue can be resolved by symmetrizing $f$ and $K$. Let
  $$
    \widehat K = \bigcup_{g \in G} gK
  $$
  and for $x \in \widehat K$ let:
  $$
    \widehat f(x) = \frac{1}{|G|}\sum_{g \in G} g^{-1}f(gx)
  $$
  One can now check that $\widehat f$ is $G$-equivariant and approximates $\phi$ on $\widehat K$ up to $C \cdot \epsilon$
  where $C \in \RR$ depends only on $G$ (see~\cite[Propositions 2.1, 2.2]{Yarotsky-2022} for details).
  
  Note that $\Phi : C(V, W) \to C(V, W)$ given by $\Phi(f) = \widehat f$ is a continuous linear operator on $C(V, W)$
  such that every $G$-equivariant map is a fixed point of~$\Phi$.
  
  Let us fit this example into the abstract context outlined at the beginning of the section.
  
  Let $G$ be a finite group and $E : \FdnVect \to \FdnVect$ a functor given by $E(V) = V^G$ on objects and
  $E(f) = f \circ -$ on morphisms. The compatible functor $E : \Top \to \Top$ on the category of
  topological spaces is the same: $E(V) = V^G$ on objects and $E(f) = f \circ -$ on morphisms.
  The consequence of the compatibility requirement is the fact that
  $E_{V, W}(L(V, W)) \subseteq L(V^G, W^G)$. Moreover,
  $E_{V, W} : C(V, W) \to C(V^G, W^G)$ is a continuous linear transformation.
  
  Let $\delta : E \Rightarrow EE$ be given by $\delta_V : V^G \to (V^G)^G$ were $\delta_V(\phi)(g)(h) = \phi(hg)$. 

  Let $\beta : V \to E(V)$ be an $E$-coalgebra that captures the action of $G$ on $V$ by linear maps. This means that
  $\beta(x) = \beta_x \in V^G$ where $\beta_x(g) = gx$. Define an $E$-algebra $\gamma : E(V) \to V$ as follows:
  for $\phi \in E(V) = V^G$ we let 
  $$
    \gamma(\phi) = \frac{1}{|G|} \sum_{g \in G} g^{-1} \phi(g).
  $$
  Let us show that $\gamma$ is a left $(E, \delta)$-inverse of $\beta$.
  To show that $\gamma \circ \beta = \id_V$, take any $x \in V$ and compute:
  \begin{align*}
    \gamma \circ \beta(x)
      &= \gamma(\beta(x)) = \gamma(\beta_x) = \\
      &= \dfrac{1}{|G|} \sum_{g \in G} g^{-1} \beta_x(g) = \dfrac{1}{|G|} \sum_{g \in G} g^{-1} gx = \dfrac{1}{|G|} \sum_{g \in G} x = x.
  \end{align*}
  
  To show that $\beta \circ \gamma = E(\gamma) \circ \delta_V$, note, first, that $\beta \circ \gamma : V^G \to V^G$. So, take any $\phi \in V^G$. Then
  $$
    \beta \circ \gamma(\phi) = \beta(\gamma(\phi)) = \beta\bigg(\frac{1}{|G|} \sum_{g \in G} g^{-1} \phi(g)\bigg) = \beta_s,
  $$
  for
  $
    s = \frac{1}{|G|} \sum_{g \in G} g^{-1} \phi(g)
  $.
  So,
  $$
    \beta(\gamma(\phi))(h) = \beta_s(h) = hs = \frac{1}{|G|} \sum_{g \in G} hg^{-1} \phi(g).
  $$
  On the other hand, $E(\gamma) \circ \delta_V (\phi) = \gamma \circ \delta_V(\phi) \in V^G$. So, take any $h \in G$.
  Then (using the $\lambda$-notation for $\delta_V(\phi)(h) = \lambda g.\phi(gh)$):
  $$
    (\gamma \circ \delta_V(\phi))(h)
    = \gamma(\delta_V(\phi)(h)) = \gamma(\lambda g.\phi(gh))
    = \frac{1}{|G|} \sum_{g \in G} g^{-1} \phi(gh).
  $$
  Let us reindex the sum so that $k = gh$. Then $g^{-1} = hk^{-1}$ and the sum becomes:
  $$
    (\gamma \circ \delta(\phi))(h) = \frac{1}{|G|} \sum_{k \in G} hk^{-1} \phi(k).
  $$
  So, $\gamma$ is indeed a left $(E, \delta)$-inverse of $\beta$.
  Clearly, $\gamma$ is a linear transformation, and this observation concludes the example.
\end{EX}

\paragraph{Vector neural networks.}
It has recently been recognized that organizing a neural network around
vector neurons (instead of ``scalar neurons'') is beneficial for an efficient implementation of equivariance
in the context of deep neural networks aimed at computer vision
(see e.g.~\cite{DengEtAl-VectorNeurons2021,KatzirEtAl-VectorNeurons-2022}).

A \emph{vector neural network} is a feed-forward neural network where each neuron is a \emph{vector neuron}
modeled by a $k$-tuple of scalars for some $k \in \NN$, and the activation function is a continuous function
$\rho : \RR^k \to \RR^k$ defined on $k$-tuples (not necessarily coordinatewise).
More precisely, the behavior of a a vector neuron $\mathbf{y} \in \RR^k$ is modeled as:
$$
  \mathbf{y} = \rho\left(\sum_{i=1}^p A_i \mathbf{x}_i\right),
$$
where $\mathbf{x}_1, \ldots, \mathbf{x}_p \in \RR^n$ are
vector neurons in the previous layer, $A_1, \ldots, A_p \in \RR^{k \times n}$ are weights now implemented as matrices,
and $\rho : \RR^k \to \RR^k$ is the activation function.
The additional flexibility of vector neural networks comes from the fact that
the activation of a vector neuron now depends on the entire vector of scalars,
in contrast to the standard feed-forward neural networks where the activation function is
applied scalar by scalar.

\begin{DEF}
  Let $\rho : \RR^k \to \RR^k$ be a continuous activation function.
  By $\mathcal{VNN}^{\rho}$ we denote the set of all the functions of the form
  $$
    \RR^n \overset{P}\longrightarrow (\RR^k)^d \overset{\bar\rho}\longrightarrow (\RR^k)^d \overset{Q}\longrightarrow \RR^m
  $$
  where $n, m, d, k \in \NN$ are positive integers,
  $P : \RR^n \to (\RR^k)^d$ and $Q : (\RR^{k})^d \to \RR^m$ are linear transformations, and
  $$
    \bar\rho(\mathbf{x}_1, \ldots, \mathbf{x}_d) = (\rho(\mathbf{x}_1), \ldots, \rho(\mathbf{x}_d)),
  $$
  for all $\mathbf{x}_1, \ldots, \mathbf{x}_d \in \RR^k$.
  We shall say that functions in $\mathcal{VNN}^{\rho}$ are \emph{$\mathcal{VNN}^{\rho}$-computable}.
\end{DEF}

\begin{THM}
  Let $\sigma : \RR \to \RR$ be a non-polynomial continuous activation function.
  Let $V$ and $W$ be finitely-dimensional normed vector spaces and let
  $\alpha : V \to E(V)$ and $\beta : W \to E(W)$ be linear maps such that
  $\alpha$ is an $(E, \delta)$-module and $\beta$ has a left $(E, \delta)$-inverse which is a linear transformation.

  Then, for every continuous equivariant $\phi : (V, \alpha) \to (W, \beta)$, every subcoalgebra $(K, \theta)$ of
  $(U(V), \alpha)$ in $\Top$ where $K \subseteq U(V)$ is compact, and every $\epsilon > 0$
  there exists an $\mathcal{VNN}^{E(\sigma)}$-computable equivariant $\ell : (V, \alpha) \to (W, \beta)$ such that
  $\|\phi(x) - \ell(x)\| < \epsilon$ for all $x \in K$.
\end{THM}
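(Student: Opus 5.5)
We will carry out the symmetrization argument of the preceding Example abstractly, with $\gamma$ (the linear left $(E,\delta)$-inverse of $\beta$) playing the role of averaging. Given a continuous map $f : V \to W$, define its symmetrization
$$
  \widehat f = \gamma \circ E(f) \circ \alpha : V \to W.
$$

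\textbf{Equivariance.} We first check that $\widehat f$ is equivariant for every continuous $f$, i.e.\ that $\beta \circ \widehat f = E(\widehat f) \circ \alpha$. This uses, in order:
\begin{itemize}
  \item the identity $\beta\circ\gamma = E(\gamma)\circ\delta_W$;
  \item naturality of $\delta$, which turns $\delta_W \circ E(f)$ into $EE(f)\circ\delta_V$;
  \item the comodule identity $\delta_V\circ\alpha = E(\alpha)\circ\alpha$.
\end{itemize}
Together these give
$$
  \beta\circ\gamma\circ E(f)\circ\alpha
  = E(\gamma)\circ EE(f)\circ E(\alpha)\circ\alpha
  = E(\gamma\circ E(f)\circ\alpha)\circ\alpha .
$$

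\textbf{Fixed points.} If $\phi$ is already equivariant, then $E(\phi)\circ\alpha = \beta\circ\phi$, so $\widehat\phi = \gamma\circ\beta\circ\phi = \phi$ because $\gamma\circ\beta=\id_W$. Hence the operator $\Phi(f)=\widehat f$ fixes equivariant maps, as in the Example.

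\textbf{Approximation.} Apply Theorem~\ref{euat.thm.UAT-main} to get $f = Q\circ\bar\sigma\circ P\in\mathcal{NN}^\sigma$ with $\|\phi - f\| < \epsilon'$ on a suitable compact set. Set $\ell = \widehat f$. Then on $K$,
$$
  \phi(x)-\ell(x) = \gamma\big(E(\phi - f)(\alpha(x))\big).
$$
Since $(K,\theta)$ is a subcoalgebra in $\Top$, $\alpha(x)\in E(K)$ for $x\in K$. So we only need $E(\phi-f)$ to be small on the set $E(K)$, or on $\alpha(K)$, which is compact as the continuous image of $K$. Because $\gamma$ is linear on a finite-dimensional space it is bounded, so it suffices to make $\sup_{y\in\alpha(K)}\|E(\phi-f)(y)\|$ small.

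This is the step I expect to be the main obstacle. It needs $E_{V,W} : C(V,W)\to C(EV,EW)$ to be continuous when both sides carry the topology of uniform convergence on compacta. Concretely, we must find a compact $K'\subseteq V$ such that $\|g\|_{K'}$ small forces $\|E(g)\|_{\alpha(K)}$ small. I would read the second Assumption exactly this way. In the group case, $K' = K$ works because $E(K)=K^G$. Choosing $\epsilon'$ accordingly then gives $\|\phi-\ell\|<\epsilon$ on $K$.

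\textbf{Computability.} It remains to show $\ell$ is $\mathcal{VNN}^{E(\sigma)}$-computable. Use functoriality:
$$
  E(f) = E(Q)\circ E(\bar\sigma)\circ E(P),
$$
where $E(P)$ and $E(Q)$ are linear because $E$ preserves linear maps. Moreover $E(\bar\sigma)$ is, up to the canonical identification $E((\RR)^d)\cong (E(\RR))^d$, the coordinatewise application of $E(\sigma) : E(\RR)\to E(\RR)$ with $E(\RR)\cong\RR^k$. Hence
$$
  \ell = \big(\gamma\circ E(Q)\big)\circ \overline{E(\sigma)}\circ \big(E(P)\circ\alpha\big),
$$
where the outer factors are linear, so $\ell\in\mathcal{VNN}^{E(\sigma)}$. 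The identification $E(\RR^d)\cong E(\RR)^d$ is not automatic for an arbitrary $E$. I would justify it by assuming, or deriving from the setting, that $E$ preserves finite products, as in the motivating example $E(V)=V^G$. I will flag this as the second delicate point.
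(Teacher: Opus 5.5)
Your equivariance and fixed-point steps are exactly the paper's Claims~1 and~2. For the approximation step you take a different route from the paper.

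The paper passes to the subspace $C(X,Y)\circ i$ of $C(K,Y)$ with the sup-norm over $K$ and defines $\Psi(\phi\circ i)=\Phi(\phi)\circ i$. It uses the subcoalgebra identity $\alpha\circ i=E(i)\circ\theta$ to show $\Psi$ is well defined. It then treats $\Psi$ as a bounded operator and applies the UAT on $K$ itself, with tolerance $\epsilon/(2\|\Psi\|)$.

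You instead write $\phi-\ell=\gamma\circ E(\phi-f)\circ\alpha$, using $\Phi(\phi)=\phi$ and linearity of $\gamma$ and $E_{V,W}$. You read Assumption~2 as continuity for the topologies of uniform convergence on compacta. That is the natural reading, since $C(V,W)$ carries no global sup-norm. This gives a compact $K'\subseteq V$ with $\|E(g)\|_{\alpha(K)}\le c\,\|g\|_{K'}$, and you then apply the UAT on $K'$.

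This argument is correct and makes explicit exactly which continuity is used. It never needs $\Psi$ or the subcoalgebra hypothesis, so it works for every compact $K$. The price is that the network $f$ must approximate $\phi$ on a possibly larger set $K'$. Boundedness of $\Psi$ for the sup-norm over $K$ is precisely the statement that one may take $K'=K$. That is what invariance of $K$ is meant to deliver in the paper, as in the finite-group case where $\alpha(K)\subseteq K^G$.

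The genuine gap is your second ``delicate point''. As written, you either leave the identification $E(\RR^d)\cong E(\RR)^d$ unproved, or you add a finite-product-preservation hypothesis that the theorem does not have. Neither is necessary, because Assumption~2 makes $E$ linear on hom-sets. With $\pi_j,\iota_j$ the coordinate projections and injections of $\RR^d$,
\[
  [E(\iota_1),\ldots,E(\iota_d)]\circ\langle E(\pi_1),\ldots,E(\pi_d)\rangle
  =\textstyle\sum_{j} E(\iota_j\circ\pi_j)
  =E\big(\textstyle\sum_{j}\iota_j\circ\pi_j\big)
  =E(\id_{\RR^d})=\id_{E(\RR^d)}.
\]
Applying $E$ to $\pi_j\circ\bar\sigma=\sigma\circ\pi_j$ gives $\langle E(\pi_1),\ldots,E(\pi_d)\rangle\circ E(\bar\sigma)=(E(\sigma)\times\cdots\times E(\sigma))\circ\langle E(\pi_1),\ldots,E(\pi_d)\rangle$. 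Hence
\[
  E(\bar\sigma)=[E(\iota_1),\ldots,E(\iota_d)]\circ(E(\sigma)\times\cdots\times E(\sigma))\circ\langle E(\pi_1),\ldots,E(\pi_d)\rangle,
\]
and both outer maps are linear because $E$ preserves linear maps. This left-inverse relation is all the $\mathcal{VNN}^{E(\sigma)}$ factorization requires, and it is how the paper finishes. Using also $E(0)=0$, the other composite is the identity too, so $E$ does preserve these particular products, but you never need that.
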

\begin{proof}
  Let $\gamma : E(W) \to W$ be a linear transformation which is a left $(E, \delta)$-inverse of $\beta$.
  For $f \in C(V, W)$ let 
  $$
    \Phi(f) = \gamma \circ E(f) \circ \alpha.
  $$
  Note that $\Phi$ is a continuous linear operator $C(V, W) \to C(V, W)$.

  \bigskip

  Claim~1. For every continuous $f : V \to W$ the mapping $\Phi(f)$ is a continuous equivariant mapping $(V, \alpha) \to (W, \beta)$.

  Proof. It is clear that $\Phi(f)$ is continuous. To show that $\Phi(f)$ is equivariant, note that
  \begin{center}
    \begin{tikzcd}
      V \arrow[r, "\alpha"] \arrow[d, "\alpha"'] \arrow[rrr, bend left, dotted,  "\Phi(f)"] & E(V) \arrow[d, "\delta_V"] \arrow[r, "E(f)"] & E(W) \arrow[d, "\delta_W"] \arrow[r, "\gamma"] & W \arrow[d, "\beta"]\\
      E(V) \arrow[r, "E(\alpha)"'] \arrow[rrr, bend right, dotted, "E(\Phi(f))"']              & EE(V) \arrow[r, "EE(f)"']                    & EE(W) \arrow[r, "E(\gamma)"'] & E(W)
    \end{tikzcd}
  \end{center}
  The leftmost square commutes because $\alpha$ is an $(E, \delta)$-module, the central square commutes because $\delta$ is natural,
  and the rightmost square commutes because $\gamma$ is a left $(E, \delta)$-inverse of $\beta$.~$\dashv$

  \bigskip

  Claim~2. For every continuous equivariant $\psi : (V, \alpha) \to (W, \beta)$ we have that $\Phi(\psi) = \psi$.

  Proof. Since $\psi$ is equivariant:
  \begin{center}
    \begin{tikzcd}
      V \arrow[r, "\psi"] \arrow[d, "\alpha"'] & W \arrow[d, "\beta"]\\
      E(V) \arrow[r, "E(\psi)"'] & E(W)
    \end{tikzcd}
  \end{center}
  This fact, together with $\gamma \circ \beta = \id_W$, reduces the proof to straightforward computation:
  $\Phi(\psi) = \gamma \circ E(\psi) \circ \alpha = \gamma \circ \beta \circ \psi = \psi$.~$\dashv$

  \bigskip

  Moving to $\Top$ for the second stage of the proof, let $X = U(V)$ and $Y = U(W)$.
  We shall implicitly use the convention that $U(f) = f$.
  In particular, for $f \in C(X, Y) = C(V, W)$ it makes sense to consider $\Phi(f)$; and
  if $(V, \alpha)$ is an $E$-coalgebra in $\FdnVect$ then $(X, \alpha)$ is an $E$-coalgebra in $\Top$
  (here, $E : \Top \to \Top$ is constructed as the left Kan extension of $UE : \FdnVect \to \Top$ along $U$,
  see the beginning of the section).

  Take a subcoalgebra $(K, \theta)$ of $(X, \alpha)$ such that $K$ is a compact subset of $X$.
  Since $(K, \theta)$ is a subcoalgebra of $(X, \alpha)$, the inclusion map $i : K \to X$, $i(x) = x$ is equivariant:
  \begin{center}
    \begin{tikzcd}
      K \arrow[r, "i"] \arrow[d, "\theta"'] & X \arrow[d, "\alpha"]\\
      E(K) \arrow[r, "E(i)"'] & E(X)
    \end{tikzcd}
  \end{center}
  Note that
  $$
    C(X, Y) \circ i = \{\phi \circ i : \phi \in C(X, Y)\} \subseteq C(K, Y).
  $$
  It is easy to check that $C(X, Y) \circ i$ is a linear
  subspace of $C(K, Y)$, and hence a normed space ($C(X, Y) \circ i$ inherits the sup-norm from $C(V, W)$
  restricted to $K$). Define $\Psi : C(X, Y) \circ i \to C(X, Y) \circ i$ by
  $$
    \Psi(\phi \circ i) = \Phi(\phi) \circ i.
  $$

  \bigskip

  Claim~3. The definition of $\Psi$ is correct.

  Proof. To show that the definition of $\Psi$ is correct we have to show that
  $$
    \phi_1 \circ i = \phi_2 \circ i \Rightarrow \Phi(\phi_1) \circ i = \Phi(\phi_2) \circ i.
  $$
  This follows by an easy computation:
  \begin{align*}
    \phi_1 \circ i &= \phi_2 \circ i\\
    \gamma \circ E(\phi_1) \circ E(i) \circ \theta &= \gamma \circ E(\phi_2) \circ E(i) \circ \theta\\
    \gamma \circ E(\phi_1) \circ \alpha \circ i &= \gamma \circ E(\phi_2) \circ \alpha \circ i\\
    \Phi(\phi_1) \circ i &= \Phi(\phi_2) \circ i.
  \end{align*}

  \bigskip

  So, $\Psi$ is a linear and continuous operator on the normed space $C(X, Y) \circ i$.

  \bigskip

  Claim~4. If $\Phi(\phi) = \phi$ then $\Psi(\phi \circ i) = \phi \circ i$.

  Proof. $\Psi(\phi \circ i) = \Phi(\phi) \circ i = \phi \circ i$.

  \bigskip

  Since $\Psi$ is a continuous (and hence bounded) linear operator on the normed vector space $C(X, Y) \circ i$
  it has a finite norm $\|\Psi\| \in \RR$.

  Now, take any continuous equivariant $\phi : (X, \alpha) \to (Y, \beta)$ and $\epsilon > 0$.
  By the Universal Approximation Theorem \ref{euat.thm.UAT-main} there exists an
  $$
    f = Q \circ \bar\sigma \circ P \in \mathcal{NN}^{\sigma}
  $$
  such that $\|\phi(x) - f(x)\| < \dfrac{\epsilon}{2\|\Psi\|}$ for all $x \in K$.
  Hence,
  $$
    \|\phi \circ i - f \circ i\| \le \dfrac{\epsilon}{2\|\Psi\|}
  $$
  in $C(X, Y) \circ i$. Put $\ell = \Phi(f)$. Then:
  $$
    \ell = \gamma \circ E(Q) \circ E(\bar\sigma) \circ E(P) \circ \alpha.
  $$
  Note that $\ell : (X, \alpha) \to (Y, \beta)$ is equivariant because of Claim~1,
  while Claim~2 yields:
  \begin{align*}
    \|\phi \circ i - \ell \circ i\|
    & = \|\Psi(\phi \circ i) - \Psi(f \circ i)\|\\
    &\le \|\Psi\| \cdot \|\phi \circ i - f \circ i\|\\
    &\le \|\Psi\| \cdot \dfrac{\epsilon}{2\|\Psi\|} < \epsilon.
  \end{align*}
  In other words,
  $$
    \|\phi(x) - \ell(x)\| < \epsilon, \text{ for all } x \in K.
  $$
  
  \bigskip

  To complete the proof we still have to show that $\ell \in \mathcal{VNN}^{E(\sigma)}$.
  Moving back to $\FdnVect$ for the final stage of the proof,
  recall that the category of vector spaces is an abelian category, so each finite power
  $\RR^d$ is also a copower $\RR \oplus \ldots \oplus \RR$ ($d$ times). Consequently,
  each projection $\pi_j : \RR^d \to \RR : (x_1, \ldots, x_d) \mapsto x_j$,
  $1 \le j \le d$, is accompanied by an injection
  $\iota_j : \RR \to \RR^d : x \mapsto (0, \ldots, 0, x, 0 \ldots, 0)$ with $x$ at the $j$th place.
  Note that $\pi_j \circ \iota_j = \id_\RR$, $1 \le j \le d$, and that
  $$
    [\iota_1, \ldots, \iota_d] \circ \langle \pi_1, \ldots, \pi_d \rangle = \id_{\RR^d}.
  $$
  Here, $[f_1, \ldots, f_d] : \RR^d \to \RR^d$ denotes the cotupling:
  $$
    [f_1, \ldots, f_d](x_1, \ldots, x_d) = f_1(x_1) + \ldots + f_d(x_d),
  $$
  and $\langle f_1, \ldots, f_d \rangle : \RR^d \to \RR^d$ denotes the tupling:
  $$
    \langle f_1, \ldots, f_d \rangle(\mathbf{x}) = (f_1(\mathbf{x}), \ldots, f_d(\mathbf{x})), \quad \mathbf{x} \in \RR^d.
  $$
  Note that both the tupling and the cotupling of linear maps is a linear map.

  Since $\bar\sigma : \RR^d \to \RR^d$ applies $\sigma$ coordinatewise, the following diagram commutes for all $1 \le j \le d$:
  \begin{center}
    \begin{tikzcd}
      \RR^d \arrow[r, "\bar\sigma"] \arrow[d, "\pi_j"'] & \RR^d \arrow[d, "\pi_j"] \\
      \RR \arrow[r, "\sigma"'] & \RR
    \end{tikzcd}
  \end{center}
  so after applying $E$ to each of the diagrams and tupling the arrows $E(\pi_j)$ we get:
  \begin{center}
    \begin{tikzcd}
      E(\RR^d) \arrow[rr, "E(\bar\sigma)"] \arrow[d, "{\langle E(\pi_1), \ldots, E(\pi_d)\rangle}"'] & & E(\RR^d) \arrow[d, "{\langle E(\pi_1), \ldots, E(\pi_d)\rangle}"] \\
      E(\RR)^n \arrow[rr, "E(\sigma) \times \ldots \times E(\sigma)"'] & & E(\RR)^n
    \end{tikzcd}
  \end{center}
  hence:
  $$
    \langle E(\pi_1), \ldots, E(\pi_d)\rangle \circ E(\bar\sigma) = (E(\sigma) \times \ldots \times E(\sigma)) \circ \langle E(\pi_1), \ldots, E(\pi_d)\rangle.
  $$
  An easy calculation together with the fact that $E$ is linear on morphisms suffices to
  show that $[E(\iota_1), \ldots, E(\iota_d)]$ is a left inverse of $\langle E(\pi_1), \ldots, E(\pi_d) \rangle$:
  \begin{multline*}
    [E(\iota_1), \ldots, E(\iota_d)] \circ \langle E(\pi_1), \ldots, E(\pi_d) \rangle (\mathbf x) =\\
    = E(\iota_1) \circ E(\pi_1) (\mathbf x) + \ldots + E(\iota_d) \circ E(\pi_d) (\mathbf x) =\\
    = E(\iota_1 \circ \pi_1) (\mathbf x) + \ldots + E(\iota_d \circ \pi_d) (\mathbf x) =\\
    = E(\iota_1 \circ \pi_1 + \ldots + \iota_d \circ \pi_d) (\mathbf x) = E(\id_{\RR^d}) (\mathbf x) = \mathbf x.
  \end{multline*}
  Therefore,
  $$
    E(\bar\sigma) = [E(\iota_1), \ldots, E(\iota_d)] \circ (E(\sigma) \times \ldots \times E(\sigma)) \circ \langle E(\pi_1), \ldots, E(\pi_d)\rangle,
  $$
  which allows us to write $\ell$ as:
  \begin{multline*}
    \ell = \gamma \circ E(Q) \circ [E(\iota_1), \ldots, E(\iota_d)] \circ\\
    \circ (E(\sigma) \times \ldots \times E(\sigma)) \circ \\
    \circ \langle E(\pi_1), \ldots, E(\pi_d)\rangle \circ E(P) \circ \alpha.
  \end{multline*}
  In other words,
  $$
    \ell = Q' \circ (E(\sigma) \times \ldots \times E(\sigma)) \circ P',
  $$
  where $Q' = \gamma \circ E(Q) \circ [E(\iota_1), \ldots, E(\iota_d)]$ and $P' = \langle E(\pi_1), \ldots, E(\pi_d)\rangle \circ E(P) \circ \alpha$
  are linear maps. This concludes the proof that $\ell \in \mathcal{VNN}^{E(\sigma)}$.
\end{proof}

Let us conclude the section with a remark on compact subcoalgebras of $(X, \alpha)$ for $X = U(V)$.
Let $S$ be a finite sample space understood as a discrete topological space.
A feature extractions procedure yields an injective function $e : S \to U(V)$ for some normed finitely dimensional vector space $V$.
If the sample space exhibits the specified type of symmetries, this can be modeled by a coalgebra structure $\theta : S \to E(S)$.
Note that $\theta$ is trivially continuous, so $\theta : S \to E(S)$ lives in $\Top$.
If $e$ is an equivariant embedding of $(S, \theta)$ into $(U(V), \alpha)$ then $e$ is a coalgebra homomorphism:
\begin{center}
  \begin{tikzcd}
    S \arrow[r, "e"] \arrow[d, "\theta"'] & U(V) \arrow[d, "\alpha"]\\
    E(S) \arrow[r, "E(e)"'] & EU(V)
  \end{tikzcd}
\end{center}
where we implicitly use that $U(\alpha) = \alpha$ and $UE = EU$. So, $S$, being finite, is a compact subcoalgebra of $(U(V), \alpha)$.


\begin{thebibliography}{99}
\bibitem{AHS}
  J. Ad\'amek, H. Herrlich, G. E. Strecker:
  \textit{Abstract and Concrete Categories: The Joy of Cats},
  Dover Books on Mathematics, Dover Publications 2009.

\bibitem{Bar93}
  M. Barr.
  Terminal coalgebras in well-founded set theory.
  Theoretical Computer Science 114(1993), 299--315.

\bibitem{GDL-main}
  M. M. Bronstein, J. Bruna, Y. LeCun, A. Szlam, P. Vandergheynst.
  Geometric deep learning: going beyond Euclidean data.
  IEEE Signal Processing Magazine, 34(2017), 18--42.

\bibitem{GDL-Protobook}
  M. M. Bronstein, J. Bruna, T. Cohen, P. Veli\v ckovi\'c.
  Geometric Deep Learning: Grids, Groups, Graphs, Geodesics, and Gauges.
  Preprint available as arXiv:2104.13478

\bibitem{Cat-GBL-2022}
  G. S. H. Cruttwell, B. Gavranovi{\'{c}}, N. Ghani, P. Wilson, F. Zanasi.
  Categorical Foundations of Gradient-Based Learning.
  In: I. Sergey (Ed), Programming Languages and Systems.
  Springer International Publishing 2022, 
  ESOP 2022. Lecture Notes in Computer Science, vol 13240. Springer, Cham.
  pp 1--28.

\bibitem{DengEtAl-VectorNeurons2021}
  C. Deng, O. Litany, Y. Duan, A. Poulenard, A. Tagliasacchi and L. Guibas.
  Vector Neurons: A General Framework for SO(3)-Equivariant Networks.
  2021 IEEE/CVF International Conference on Computer Vision (ICCV), Montreal, QC, Canada, 2021,
  pp.\ 12180--12189. doi: 10.1109/ICCV48922.2021.01198  

\bibitem{backprop-as-ftr}
  B. Fong, D. Spivak, R. Tuy\'{e}ras.
  Backprop as functor: a compositional perspective on supervised learning.
  In: Proceedings of the 34th Annual ACM/IEEE Symposium on Logic in Computer Science (LICS '19), IEEE Press, Vancouver, Canada, 2021, Article No.: 11
  (13 pages)

\bibitem{gavran-2019}
  B. Gavranovi\'c.
  Learning Functors using Gradient Descent.
  In: Proceedings of Applied Category Theory 2019, Electronic Proceedings in Theoretical Computer Science 323, 230--245.

\bibitem{CDL-position-paper}
  B. Gavranovi\'c, P. Lessard, A. Dudzik, T. von Glehn, J. G. M. Ara\'ujo, P. Veli\v ckovi\'c.
  Position: Categorical Deep Learning is an Algebraic Theory of All Architectures.
  Preprint available as arXiv:2402.15332

\bibitem{KatzirEtAl-VectorNeurons-2022}
  Katzir, O., Lischinski, D., Cohen-Or, D.:
  \emph{Shape-Pose Disentanglement Using SE(3)-Equivariant Vector Neurons}.
  In: Avidan, S., Brostow, G., Ciss\'e, M., Farinella, G.M., Hassner, T. (eds) Computer Vision – ECCV 2022. ECCV 2022.
  Lecture Notes in Computer Science, vol 13663 (2022).
  Springer, Cham. doi: 10.1007/978-3-031-20062-5\_27

\bibitem{Pinkus-1999}
  A. Pinkus.
  Approximation theory of the mlp model in neural networks.
  Acta Numerica, 8(1999), 143--195.

\bibitem{Riehl}
  Emily Riehl.
  Category Theory in Context.
  Dover Publications, 2017. Publicly available at:
  https://emilyriehl.github.io/files/context.pdf

\bibitem{rutten-2000}
  J. J. M. M. Rutten.
  Universal coalgebra: a theory of systems.
  Theoretical Computer Science 249 (2000), 3--80.

\bibitem{Yarotsky-2022}
  D. Yarotsky.
  Universal approximations of invariant maps by neural networks.
  Constructive Approximation, 55(1) (2022), 407--474.
  doi: 10.1007/s00365-021-09546-1. 

\bibitem{CDL-website}
  https://categoricaldeeplearning.com/

\bibitem{GDL-website}
  https://geometricdeeplearning.com/

\end{thebibliography}
\end{document}